\newtheorem{theorem}{Theorem}
\newtheorem{lemma}{Lemma}
\newtheorem{corollary}{Corollary}
\newtheorem{claim}{Claim}
\begin{document}

\title{Achieving Differential Privacy in Vertically Partitioned Multiparty Learning}

\author{
Depeng Xu, Shuhan Yuan, Xintao Wu\\
University of Arkansas\\
\{depengxu,sy005,xintaowu\}@uark.edu
}

\maketitle
\begin{abstract}
Preserving differential privacy has been well studied under centralized setting. However, it's very challenging to preserve differential privacy under multiparty setting, especially for the vertically partitioned case. In this work, we propose a new framework for differential privacy preserving multiparty learning in the vertically partitioned setting. Our core idea is based on the functional mechanism that achieves differential privacy of the released model by adding noise to the objective function. We show the server  can simply dissect the objective function into single-party and cross-party sub-functions, and allocate computation and perturbation of their polynomial coefficients to local parties. Our method needs only one round of noise addition and secure aggregation. The released model in our framework achieves the same utility as applying the functional mechanism in the centralized setting. Evaluation   on real-world and synthetic datasets for linear and logistic regressions  shows the effectiveness of our proposed method.

\end{abstract}

\section{Introduction}
Rapid growth of model technology is largely driven by data. In most industries, data exist in the form of isolated islands. Federated learning is proposed to build machine learning models based on distributed datasets across multiple parties \cite{McMahanMRHA17,YangLCT19}. In particular, vertically partitioned multiparty learning is applicable when parties share the same record ID space but differ in feature space, such as using user experience on the web to support decisions on  healthcare. Understandably, parties do not want to share raw data or statistics due to privacy concerns. How to build a global model through data barrier while preserving local parties' privacy  is a challenging problem. 

Differential privacy is a standard privacy preserving scheme to achieve opt-out right of individuals \cite{Dwork2006}. In general, differential privacy guarantees the query results or the released model cannot be exploited by attackers to derive whether one particular record is present or absent in the underlying dataset. Many mechanisms have been proposed to achieve differential privacy \cite{Dwork:2011,4389483,Chaudhuri2011,Nissim:2007}.
For example, the classic Laplace mechanism  injects random noise into the released results such that the inclusion or exclusion of a single   record   makes no statistical difference   \cite{Dwork2006}. For machine learning models,
\cite{SongCS13,AbadiCGMMT016} develop methods of adding noise to gradients to preserve differential privacy of training data. Functional mechanism \cite{zhang2012functional}, which adds noise to the objective function rather than parameters of built models, has also been shown great success in deep learning models \cite{DBLP:conf/aaai/Phan0WD16}. 

Recently, several works propose  to train privacy preserving models under decentralized settings. Research in \cite{Shokri2015} proposes a collaborative deep learning framework in which participants train independently and share only subsets of updates of parameters under the horizontally distributed setting. However, it is not applicable in the vertically partitioned setting. This is because we cannot partition the gradients based on features and  each local  party  needs to collect raw data of those features owned by other parties, which requires   extensive use of secure multiparty computation to update gradients in each iteration.  

There have been several research works on building privacy preserving models in the vertically partitioned setting. Research in \cite{Heinze-DemlMM17} develops a framework for private data sharing for the purpose of statistical estimation. Each party  communicates perturbed random projections of their locally held features to ensure differential privacy. However, the task focuses on the statistical estimation of coefficients rather than releasing a jointly trained model in our context.  Research in \cite{LouC18} develops a distributed private block-coordinate Frank-Wolfe algorithm under arbitrary sampling. They design an active feature sharing scheme by utilizing private Johnson-Lindenstrauss transform to update local partial gradients in a differentially private and communication efficient manner. However, the gradient perturbation requires noise addition in each iteration, which is difficult to achieve good utility-privacy tradeoff, as shown in our evaluation.  In ensemble learning, research in \cite{YaoGKTCD019} proposes to enhance privacy preserving logistic regression by feature-wise partitioned stacking. The proposed method is combined with hypothesis transfer learning to enable learning across different organizations.  However, this research does not really apply to vertical partitioned learning as the high-level model still needs to access all data to construct meta-data set when training private logistic regression.

In this work, we propose a new framework for differential privacy preserving multiparty learning in the vertically partitioned setting. Our core idea is based on the functional mechanism that achieves differential privacy of the released model by adding noise to the objective function. In the framework, we show the server can simply dissect the objective function into single-party and cross-party sub-functions and rewrite them in the polynomial form. For the coefficients in the polynomial form related to one single party, they can be calculated by each party in a differentially private manner. For those coefficients related to two or multiple parties, we apply secure vector multiplication and then add noise before sending to server. The server then solves the perturbed objective function in the server side and releases the private model. Our method needs only one round of noise addition and secure aggregation. Hence, both good privacy-utility tradeoff and computational efficiency can be achieved.  In fact, the released model in our framework achieves the same utility as applying the functional mechanism in the centralized setting. We evaluate our method on real-world and synthetic datasets for linear and logistic regressions. The experiment results show the effectiveness of our proposed method. 

\section{Preliminaries}
In this section, we revisit how to achieve differential privacy in the centralized setting. 
Consider a dataset $D$ with $n$ users. Each user's information is a record $t_i=\{\mathbf{x}_i,y_i\}$, where $\mathbf{x}_i$ is the user's feature information and $y_i$ is the user's label. The total number of features  is $d$.  We assume that $x_{ia}\in[-1,1]$ for $a\in[1,d]$ and $y\in[-1,1]$ for linear regression or $y\in\{0,1\}$ for logistic regression. 
The objective is to build a model $\hat{y} = q(\mathbf{x};\mathbf{w})$  from   $D$ that achieves differential privacy. To fit $\mathbf{w}$, we have an objective function $f_D(\mathbf{w})=\sum_{i=1}^n f(t_i;\mathbf{w})$ that takes $t_i$ and $\mathbf{w}$ as input. The optimal model parameter   is defined as: ${\mathbf{w}} = \arg\min\limits_\mathbf{w} \sum_{i=1}^n f(t_i;\mathbf{w})$. 
We use linear regression and logistic regression as  examples in this paper.

\subsection{Differential Privacy}
Differential privacy guarantees   output of a query $q$ be insensitive to the presence or absence of   one   record in a dataset. 

% \begin{definition}
	\textit{Differential privacy} \cite{Dwork2006}.  A mechanism $\mathcal{M}$ satisfies $\varepsilon$-differential privacy, if for all neighboring datasets $D$ and $D'$ that differ in exactly one record and all subsets $Z$ of $\mathcal{M}$'s range:
	\begin{equation}\Pr(\mathcal{M}(D)\in Z) \leq \exp(\varepsilon)\cdot \Pr(\mathcal{M}(D')\in Z).\nonumber\end{equation}
% \end{definition} 
The parameter $\varepsilon$ denotes the privacy budget (smaller values indicate  stronger privacy guarantee). 
% \begin{definition}

	\textit{Global sensitivity} \cite{Dwork2006}. Given a query $q$: $D \rightarrow \mathbb{R}^d$, the global sensitivity $\Delta$ is defined as $\Delta=\max_{D,D'} ||q(D)-q(D')||_1$.
% \end{definition}
The global sensitivity measures the maximum possible change in $q(D)$ when one record in the dataset changes.
The Laplace mechanism is a popular method to achieve differential privacy. It adds identical independent noise into each output value of $q(D)$. 

% \begin{definition}
	\textit{Laplace mechanism} \cite{Dwork2006}. Given a dataset $D$ and a query $q$, a mechanism $\mathcal{M}(D)=q(D)+\boldsymbol{\eta}$ satisfies $\varepsilon$-differential privacy, where $\boldsymbol{\eta}$ is a random vector drawn from $Lap(\frac{\Delta}{\varepsilon})$ \footnote{The Laplace distribution $Lap(\boldsymbol{\eta}|\mu,\sigma)$
		with mean $\mu$ and scale $\sigma $ has probability density function $Lap({\eta}|\mu,\sigma)= \frac{1}{2\sigma}\exp(\frac{|x-\mu|}{\sigma})$. Its variance is $2\sigma^2$. Note $\mu=0$ if not specified.}.
% \end{definition}

Alternately, adding Gaussian noise $N(0,\sigma^2)$ with   $\sigma$ calibrated to $\Delta\ln{(1/\delta)}/\varepsilon$, one can achieve $(\varepsilon,\delta)$-differential privacy, where $\delta>0$ gives relaxed differential privacy.

\subsection{Functional Mechanism}
Functional mechanism \cite{zhang2012functional} is a differentially private method designed for optimization based models. It achieves $\varepsilon$-differential privacy by injecting noise into the objective function   and returns privacy preserving parameter $\bar{\mathbf{w}}$ that minimizes the perturbed objective function.

Because the objective function $f_D(\mathbf{w})$ is a complicated function of $\mathbf{w}$, the functional mechanism exploits the polynomial representation of $f_D(\mathbf{w})$. The model parameter $\mathbf{w}$ is a vector that contains $d$ values $w_1,w_2,\cdots,w_d$. Let $\phi(\mathbf{w})$ denote a product of $w_1,w_2,\cdots,w_d$, i.e., $\phi(\mathbf{w})=w_1^{c_1} \cdot w_2^{c_2} \cdots w_d^{c_d}$ for some $c_1,c_2,\cdots, c_d \in \mathbb{N}$. Let $\mathbf{\Phi}_j$ ($j\in \mathbb{N}$) denote the set of all products of $w_1,w_2,\cdots,w_d$ with degree $j$, i.e., $\mathbf{\Phi}_j = \{w_1^{c_1}  w_2^{c_2} \cdots w_d^{c_d}|\sum_{l=1}^d c_l = j\}$. For example, $\mathbf{\Phi}_1=\{w_1,w_2,\cdots,w_d\}$, and $\mathbf{\Phi}_2=\{w_a \cdot w_b|a,b \in [1,d]\}$.

Based on the Stone-Weierstrass Theorem \cite{rudin1953principles}, any continuous and differentiable function can be expressed in the polynomial representation. Hence, the objective function $f_D(\mathbf{w})$ can be expressed as a polynomial of $w_1,w_2,\cdots,w_d$, for some $J \in \mathbb{N}$: 
\begin{equation}
f_D(\mathbf{w}) = \sum_{i=1}^n \sum_{j=0}^J \sum_{\phi \in \mathbf{\Phi}_j} \lambda_{\phi t_i} \phi(\mathbf{w}),
\label{eq:polyfun}
\end{equation}
where $\lambda_{\phi t_i} \in \mathbb{R}$ denotes the coefficient of $\phi(\mathbf{w})$.

Functional mechanism perturbs the objective function $f_D(\mathbf{w})$ by injecting Laplace noise  into its polynomial coefficients $\bar{\lambda}_\phi=\sum_{i=1}^n\lambda_{\phi t_i}+Lap(\frac{\Delta_f}{\varepsilon})$, where the global sensitivity  of ${f}_D(\mathbf{w})$ is $\Delta_f=2\max\limits_t \sum_{j=1}^J \sum_{\phi \in \mathbf{\Phi}_j} ||\lambda_{\phi t}||_1$. Then the model parameter $\bar{\mathbf{w}}$ is derived by minimizing the perturbed function $\bar{f}_D (\mathbf{w})$.

\subsubsection{Application to linear regression.}
A linear regression on $D$ returns a prediction function $\hat{y}_i = q(\mathbf{x}_i; \mathbf{w}) = \mathbf{x}_i^T \mathbf{w}$.
The objective function of linear regression is defined as:
\begin{equation}
\label{eq:linfun}
\begin{aligned}
    &f_D(\mathbf{w}) = \sum_{i=1}^n(y_i-\mathbf{x}_i^T \mathbf{w})^2 = \sum_{i=1}^n(y_i)^2\\
    &-\sum_{a=1}^d(2\sum_{i=1}^n y_i{x}_{ia}) {w}_a 
    +\sum_{1\leq a,b \leq d}(\sum_{i=1}^n{x}_{ia}{x}_{ib}){w}_a\cdot{w}_b.
\end{aligned}
\end{equation}
We get the polynomial coefficients  
$ \lambda_{\phi_0} = \sum\limits_{i=1}^n(y_i)^2$, $ \lambda_{ w_a} = -2\sum\limits_{i=1}^n y_i{x}_{ia}$, and $ \lambda_{ {w}_a\cdot{w}_b} = \sum\limits_{1\leq a,b \leq d}\sum\limits_{i=1}^n{x}_{ia}{x}_{ib}$.
and then add $Lap(\frac{\Delta_f}{\varepsilon})$ to the coefficients, where the global sensitivity  of ${f}_D(\mathbf{w})$ for linear regression is $\Delta_f=2(1+2d+d^2)$.

\subsubsection{Application to logistic regression.}
A logistic regression on $D$ returns a function which predicts $\hat{y}_i = 1$ with probability $\hat{y}_i = q(\mathbf{x}_i; \mathbf{w}) = \exp (\mathbf{x}_i^T \mathbf{w})/(1+\exp(\mathbf{x}_i^T \mathbf{w}))$.
The objective function of logistic regression is defined as:
\begin{equation}
\label{logfun}
f_D(\mathbf{w}) = \sum_{i=1}^n\left[\log(1+\exp(\mathbf{x}_i^T\mathbf{w}))-y_i\mathbf{x}_i^T\mathbf{w}\right].
\end{equation}
As the polynomial form of  Equation \ref{logfun} contains terms with unbounded degrees, to apply the functional mechanism, it is rewritten as the approximate polynomial representation based on Taylor expansion \cite{zhang2012functional}:
\begin{equation}
{f}_D(\mathbf{w}) = \Big(\sum\limits_{i=1}^n\sum\limits_{j=0}^2\frac{f_1^{(j)}(0)}{j!}\left(\mathbf{x}_i^T\mathbf{w}\right)^j\Big)-\Big(\sum\limits_{i=1}^n y_i\mathbf{x}_i^T\Big)\mathbf{w},
\label{eq:polylog}
\end{equation}
where $f_1(\cdot)=\log(1+\exp(\cdot))$, $J=2$.
We get the  polynomial coefficients $\lambda_{w_a} = \sum\limits_{i=1}^n 
\big(\tfrac{f_1^{(1)}(0)}{1!}-y_i\big){x}_{ia}$ and 
$\lambda_{ {w}_a\cdot{w}_b} = \sum\limits_{1\leq a,b \leq d}\sum\limits_{i=1}^n \tfrac{f_1^{(2)}(0)}{2!}{x}_{ia}{x}_{ib}$, and then
add $Lap(\frac{\Delta_f}{\varepsilon})$ to the coefficients, where the global sensitivity  of ${f}_D(\mathbf{w})$ for logistic regression is  $\Delta_f=\frac{d^2}{4}+d$.

\subsection{Secure Share of Scalar Product}
For privacy concerns,   actual data shall be protected and cannot be known to each party or the server. Research in  \cite{BonehGN05} proposes BGN ``doubly homomorphic'' encryption algorithm which simultaneously supports one multiplication and unlimited number of addition operations, i.e.  BGN enables two parties  to compute the scalar product $\sum\limits_{i=1}^n x_{ia}\cdot x_{ib}$ given the ciphertexts of two vectors  $\{x_{1a},x_{2a},\ldots,x_{na}\}$ and $\{x_{1b},x_{2b},\ldots,x_{nb}\}$.  
Research in \cite{YuanY14}  modifies  BGN algorithm to split the decryption capability among multiple participants for collusion-resistance decryption. 
Each participant first encrypts its private data and then uploads the ciphertexts to the server. The server then executes the operations over the ciphertexts and returns the encrypted results to the participants. Each pair of participants jointly  decrypts the actual result. During this process,  server learns no private data of a participant even if they collude with all the rest participants. Through offloading the computation tasks to the resource-abundant cloud server, this scheme makes the computation and communication complexity on each participant independent to the number of participants.

\section{Achieving Differential Privacy in Vertically Partitioned Multiparty Learning}

In this section, we propose a framework of achieving differential privacy in vertically partitioned multiparty learning based on functional mechanism.

\subsection{Problem Statement}

In the vertically partitioned multiparty setting, each user's information is held by $K$  parties separately. 
Each party $P_k$ owns a disjoint dataset $D^{k}$ on feature set $\mathbf{X}^{k}$, where $|\mathbf{X}^{k}|=d^{k}$. Similarly, $\mathbf{w}^{k}$ denotes subset of $\mathbf{w}$  corresponding to $\mathbf{X}^{k}$. Label $Y$ is not shared by all parties. Without loss of generality, we simply assume party $P_1$ holds the label.

A server coordinates $K$ parties to build a multiparty learning model.  The server is honest but curious. 
It aims to release a model trained from the whole dataset $D$ and to ensure the released model satisfies $\varepsilon$-differential privacy regarding to  $D$.
The parties  provide necessary information to the server and help server to build the $\varepsilon$-differentially private global model.
But they
do not trust the server or each other in terms of sharing users' private information from their   local datasets. Each party can share statistics  in a differentially private manner. If a computation involves at least two parties, it is conducted by a secure multiparty computation.  For  party $P_1$, it shares the label with other parties upon request through secure multiparty computation.
On top of that, each party $P_k$ cares about the level of differential privacy achieved regarding to its  sub-dataset $D^{k}$.
In the training process,   the local party $P_k$  achieves  
$\varepsilon^{(k)}$-differential privacy, where $\varepsilon^{(k)}$ is ideally a smaller  privacy level than $\varepsilon$.  

The goal is  to reduce the amount of secure multiparty computation and  noise addition to the minimum   while keeping local information secure and private.

\begin{figure}
	\centering
	\includegraphics[width=0.9\linewidth]{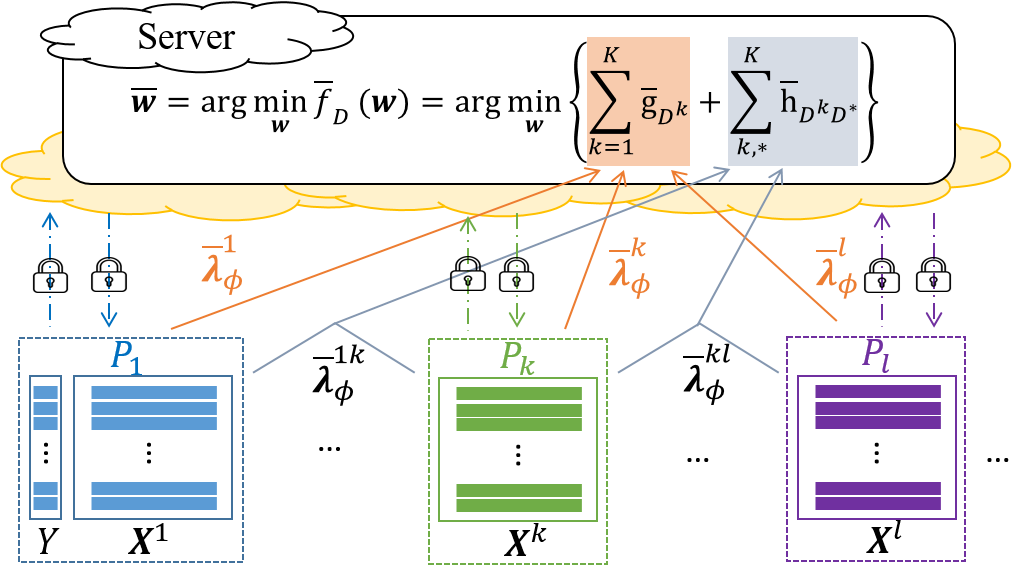}
	\caption{The framework of achieving differential privacy in vertically partitioned multiparty learning based on functional mechanism. (1) Dissect $f$ into sum of  $g$ and  $h$; (2) Collect  $\bar{\boldsymbol{\lambda}}_{\phi}^k$ from each party $P_k$; (3) Secure vector multiplication and collect $\bar{\boldsymbol{\lambda}}_{\phi}^{k*}$; (4) Solve $\bar f$.}
	\label{fig:VFM}
\end{figure}

\subsection{General Model Framework}

We apply functional mechanism in vertically partitioned multiparty learning. Functional mechanism does not inject noise directly into the regression results, but ensures privacy by perturbing   objective function of the regression analysis.  The server only collects information of the objective function at the beginning. The objective function can be dissected based on features, so computation and perturbation of the coefficients can be allocated to local parties by local feature sets. For some operations involving multiple parties, the server conducts secure multiparty computation with the parties.
Once the perturbed coefficients are collected from local parties, the server solves the perturbed objective function and releases the private model. 
Figure \ref{fig:VFM} illustrates our proposed framework of functional mechanism in vertically partitioned multiparty learning. 

The  procedure of  the framework   is shown as Algorithm \ref{FuncMechs}.Overall, there are four steps: 
\\(1) Server dissects objective function $f$ into sum of single-party sub-function $g$ and cross-party sub-function $h$, set and allocates the corresponding coefficients $ \{{\boldsymbol\lambda}_{\phi}^k\}^K,\{{\boldsymbol\lambda}_{\phi}^{k*}\}^K$ for each party $P_k$ to compute (Line \ref{line:f}-\ref{line:f2}). 
We use $\boldsymbol{\lambda}_{\phi }^{k}, \boldsymbol{\lambda}_{\phi }^{k*}$ to denote all the single-party coefficients from party $P_k$ and all the cross-party coefficients involving party $P_k$, respectively.
The server calculates the scale of noise needed to achieve $\varepsilon$-differential privacy and informs the parties (Line \ref{line:delta}).
\\ (2) Each party $P_k$ computes polynomial coefficients $\boldsymbol\lambda_{\phi }^{k}$ in single-party sub-function $g$ from $D^k$  and sends noisy single-party  coefficients $\bar{\boldsymbol\lambda}_{\phi}^k$   to server (Line \ref{line:ks}-\ref{line:ksn}). 
\\(3) For polynomial coefficients $\boldsymbol\lambda_{\phi }^{k*}$ in cross-party sub-function $h$, it involves data from   $P_k$ and $P_*$, such as  $\sum\limits_{i=1}^n y_i{x}_{ia}^k$, where $y_i$ is from party $P_1$, and $\sum\limits_{1\leq a,b \leq d}\sum\limits_{i=1}^n{x}_{ia}^k{x}_{ib}^l$, where $X_a \in \mathbf{X}^k,X_b \in \mathbf{X}^l$. 
Note that $\lambda_{\phi }^{k*}$ is a scalar product of two vectors from party $P_k$ and $P_*$.
All parties send encrypted vectors  of user information to server and receive back the securely aggregated polynomial coefficients ${\boldsymbol\lambda}_{\phi}^{k*}$ following the secure vector multiplication method by \cite{YuanY14} (Line \ref{line:smc}).
The parties add Laplace noise to the results and send   $\bar{\boldsymbol\lambda}_{\phi}^{k*}$ back to server (Line \ref{line:h}-\ref{line:h2}).
\\(4) Server receives all $\bar\lambda_{\phi}$, solves  noisy objective function $\bar f$ and releases the differentially private model (Line \ref{line:fgs}-\ref{line:return}).

\begin{algorithm}[htb]
	\begin{algorithmic}[1]
	    \State Set  ${f}_D(\mathbf{w}) $ by Equation \ref{eq:polyfuns}. \Comment{Server}
	    \label{line:f}
	    \State Allocate $\{{\boldsymbol\lambda}^k\}^K,\{{\boldsymbol\lambda}^{k*}\}^K$ to parties  \Comment{Server}
	    \label{line:f2}
	    \State Set  $\Delta_f=2\max\limits_t \sum_{j=1}^{J}\sum_{\phi \in \mathbf{\Phi}_j}||\lambda_{\phi t}||_1 $  \Comment{Server}
	    \label{line:delta}
	    \For {each party $P_k $ }
	    \For{each $\lambda_{\phi }^{k}\in\boldsymbol\lambda_{\phi }^{k}$}
	    \State Compute $\lambda_{\phi }^{k}=\sum_{i=1}^n \lambda_{\phi t_i}^{k}$ \Comment{Party $P_k$}
	    \label{line:ks}
	    \State Set $\bar\lambda_{\phi }^{k}=\lambda_{\phi }^{k}+Lap(\frac{\Delta_f}{\varepsilon})$  \Comment{Party $P_k$}
	    \State Send $\bar\lambda_{\phi }^{k}$ to server \Comment{Party $P_k$}
	    \label{line:ksn}
	    \EndFor
	    \For{each $\lambda_{\phi }^{k*}\in\boldsymbol\lambda_{\phi }^{k*}$}
	    \State Compute $\lambda_{\phi }^{k*}=\sum_{i=1}^n\lambda_{\phi t_i} ^{k*}$ using secure vector multiplication \Comment{Party $P_k,P_*$}
	    \label{line:smc}
	    \State Set $\bar \lambda_{\phi }^{k*}=\lambda_{\phi }^{k*}+Lap(\frac{\Delta_f}{\varepsilon})$ 
	    \Comment{Party $P_k$}
	    \label{line:h}
	    \State Send $\bar\lambda_{\phi }^{k*}$   to server \Comment{Party $P_k$}
	    \label{line:h2}
	    \EndFor
	    \EndFor
		\State Let $\bar{f}_D(\mathbf{w})=\sum\limits_{j=0}^J \sum\limits_{\phi \in \mathbf{\Phi}_j} \left[\bar\lambda^k_{\phi} \phi(\mathbf{w}^k) +\bar\lambda^{k*}_{\phi} \phi(\mathbf{w}^k,\mathbf{w}^*)\right]$, 
		\label{line:fgs} 
		and compute $\bar{\mathbf{w}} = \arg \min\limits_{\mathbf{w}}\bar{f}_D(\mathbf{w})$ \Comment{Server}
		\State Return $\bar{\mathbf{w}}$ \Comment{Server}
		\label{line:return}
	\end{algorithmic}	
	\caption{Functional mechanism in vertically partitioned multiparty learning ($D$,   $f$,    $\varepsilon$)}
	\label{FuncMechs}
\end{algorithm}

\subsubsection{Dissecting objective function.}
In our framework, we only need one round of noise addition. The for-loop in Algorithm \ref{FuncMechs} (Lines 4-15) shows the noise addition and calculation of different subpart/terms of the objective function, all of which together accounts for one single round. In fact, we take advantage that the overall objective function  can be dissected into two parts based on features,
\begin{equation}
\begin{aligned}
    f_D(\mathbf{w}) &= \sum\limits_{k=1}^K g_{D^k}+ \sum\limits_{1\leq k,*\leq K} h_{D^kD^*},
\end{aligned}
\label{eq:polyfuns}
\end{equation}
where $g_{D^k}$ is the single-party sub-function, and  $h_{D^kD^*}$ is the cross-party sub-function.
$g$   only involves data in party $P_k$.
$h$   involves data in party $P_k$ and at least one other party.
Similarly to $f_D(\mathbf{w})$ in Equation \ref{eq:polyfun}, $g$ and $h$ can also be expressed as  polynomials of $w_1,w_2,\cdots,w_d$, $ g_{D^k}
    = \sum_{j=0}^J \sum_{ \phi \in \mathbf{\Phi}_j^{k}} \lambda^k_{\phi} \phi(\mathbf{w}^k)$, $h_{D^kD^*}=  \sum_{j=0}^J \sum_{\phi \in \mathbf{\Phi}_j^{k*}} \lambda^{k*}_{\phi} \phi(\mathbf{w}^k,\mathbf{w}^*)$,
where each $\lambda_{\phi }^{k}$ denotes the single-party coefficient  and  each  $\lambda_{\phi }^{k*}$ denotes the cross-party coefficient.
Note that   $\phi(\mathbf{w}^k,\mathbf{w}^*)$ can be first order parameter ${w}^k$ that needs label from $P_1$, second order cross-party parameter ${w}^k\cdot{w}^l$ or higher order parameter that involves more than two parties.
After dissecting the objective functions, there are two types polynomial coefficients required for the server to obtain the overall objective function, i.e. single-party coefficients $\{\boldsymbol\lambda_{\phi }^{k}\}^K$ and cross-party coefficients $\{\boldsymbol\lambda_{\phi }^{k*}\}^K$.  
Only $D^k$ is required to compute $\lambda_{\phi }^{k}$.
Cross-party computation using $D^k$ and $D^*$ is required to compute $\lambda_{\phi }^{k*}$. 
The server requests $\boldsymbol\lambda_{\phi }^{k}$ from  $P_k$ and $\boldsymbol\lambda_{\phi }^{k*}$ from  $P_k$ and $P_*$. Allocation of single-party and cross-party coefficients is different to each party. It depends on the type of model, the order of parameters and the availability of label.

Take linear regression for an example.
The objective function  of linear regression is as Equation \ref{eq:linfun}.
The single-party sub-function for the label owner, party $P_1$, is $g_{D^1}=\sum\limits_{i=1}^n(y_i)^2  + \left(-2\sum_{i=1}^n y_i{\mathbf{x}^{1}_{i}}^T\right)\mathbf{w}^1 +\sum\limits_{i=1}^n 
\left(\mathbf{x}^{1}_i\right)^2\circ{(\mathbf{w}^1)}^2$, thus $\boldsymbol\lambda_{\phi }^{1}=\{\lambda^1_{\phi_0}, \boldsymbol\lambda_{ \mathbf{w}^1}^{1}, \boldsymbol\lambda_{ {(\mathbf{w}^1)}^2}^{1}\}$.
The single-party sub-function for each  party $P_k (k\neq1)$ is $g_{D^k}= \sum\limits_{i=1}^n 
\left(\mathbf{x}^{1}_i\right)^2$, thus $\boldsymbol\lambda_{\phi }^{k}=\{\boldsymbol\lambda_{ {(\mathbf{w}^k)}^2}^{k}\}$.
The cross-party sub-function is $\sum\limits_{1\leq k,*\leq K}h_{D^kD^*}=\sum\limits_{2=1}^K\left(-2\sum_{i=1}^n y_i{\mathbf{x}^{k}_{i}}^T \mathbf{w}^k\right) + \sum\limits_{1\leq k,l\leq K} \left(\sum\limits_{i=1}^n 
(\mathbf{x}^{k}_i\cdot\mathbf{x}^{l}_i)\circ(\mathbf{w}^k\cdot\mathbf{w}^{l})\right)$, thus $\{\boldsymbol\lambda_{\phi }^{k*}\}^K=\{\boldsymbol\lambda_{  \mathbf{w}^{k}}^{1k},\boldsymbol\lambda_{  \mathbf{w}^k\cdot\mathbf{w}^{l}}^{kl}\}^K$. 
The server sends the formula of coefficients to inform the parties what they need to compute. 
All $(1+d+d^2)$ coefficients in Equation \ref{eq:linfun} are allocated as follow.
For $\lambda_{ \mathbf{\phi}_0}$, it only needs party $P_1$. For first order coefficients $\boldsymbol\lambda_{ \mathbf{\Phi}_1}$,  $\boldsymbol\lambda_{ \mathbf{w}^1}^{1}$ (of size $d^{1}$) from party $P_1$ are single-party coefficients, and $\boldsymbol\lambda_{\mathbf{w}^k}^{1k}$   (of size $d^{k}$) for each party $P_k (k\neq1)$ require communicated information between $P_1$ and  $P_k$ because $P_k$ does not own the label  and need $y_i$ from  $P_1$.  
For  second order coefficients $\boldsymbol\lambda_{  \mathbf{\Phi}_2}  $,  all $\boldsymbol\lambda_{  {(\mathbf{w}^k)}^2}^{k}  $ (of size ${(d^{1})}^2$) are single-party coefficients for each party $P_k$, and $\boldsymbol\lambda_{  \mathbf{w}^k\cdot\mathbf{w}^{l}}^{kl}$ (of size $d^{k}\cdot d^{l}$) require  information from two parties to compute coefficients of $\mathbf{w}^{k}\cdot\mathbf{w}^{l}$ for each pair of    $P_k, P_l$.

\subsubsection{Collecting single-party   and  cross-party coefficients. }
Before each party $P_k$  provides necessary information to the server, the server decides the scale of noise needed for the model to satisfy $\varepsilon$-differential privacy regarding to the whole dataset $D$ based on simply the input space (dimension $d$ and range of $\mathbf{x}_i, y_i$). 
To achieve $\varepsilon$-differential privacy, the functional mechanism adds $Lap(\frac{\Delta_f}{\varepsilon})$ noise to the polynomial coefficients of the objective function. The server calculates the global sensitivity of   objective function $f_D(\mathbf{w})$, and then informs the parties the scale of noise that the parties need to add when sending  results to the server.
\begin{lemma}
The global sensitivity of ${f}_D(\mathbf{w})$ is:
\begin{equation}
\begin{aligned}
\label{eq:delta_fs}
\Delta_f&=2\max\limits_t \sum_{j=1}^{J}\sum_{\phi \in \mathbf{\Phi}_j}||\lambda_{\phi t}||_1.
\end{aligned}
\end{equation}
\end{lemma}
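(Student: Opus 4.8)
The plan is to recognize this as the functional-mechanism sensitivity bound of \cite{zhang2012functional}, re-derived in the present notation, after first checking that the vertically-partitioned dissection of Equation \ref{eq:polyfuns} does not change the set of perturbed coefficients. The mechanism releases $\bar{\mathbf{w}} = \arg\min_{\mathbf{w}} \bar f_D(\mathbf{w})$, which is a deterministic post-processing of the vector of perturbed polynomial coefficients. By the dissection, $\{\boldsymbol{\lambda}_{\phi}^{k}\}^K \cup \{\boldsymbol{\lambda}_{\phi}^{k*}\}^K$ is exactly the same collection of coefficients appearing in the centralized polynomial form Equation \ref{eq:polyfun}, merely regrouped by which parties' features each term involves. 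Hence it suffices to bound the $\ell_1$ sensitivity of the centralized coefficient vector $q(D) = \big(\sum_{i=1}^n \lambda_{\phi t_i}\big)_{\phi\in\mathbf{\Phi}_j,\ 1\le j\le J}$. Since adding a constant to the objective does not move its minimizer, the degree-$0$ coefficient need not be perturbed, which is why the outer sum in Equation \ref{eq:delta_fs} starts at $j=1$.

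First I would use the additive, record-local structure of Equation \ref{eq:polyfun}: each coefficient is $\sum_{i=1}^n \lambda_{\phi t_i}$, and $\lambda_{\phi t_i}$ depends only on the single record $t_i$. Let $D,D'$ be neighboring and, without loss of generality, agree on records $t_1,\dots,t_{n-1}$ and differ only in the last, with $D$ holding $t_n=t$ and $D'$ holding $t_n=t'$. Then for every $\phi$ the contributions of the common records cancel, so
\[
\| q(D) - q(D')\|_1 \;=\; \sum_{j=1}^{J}\sum_{\phi\in\mathbf{\Phi}_j}\big\|\lambda_{\phi t} - \lambda_{\phi t'}\big\|_1 .
\]

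Next I would apply the triangle inequality termwise, $\|\lambda_{\phi t}-\lambda_{\phi t'}\|_1 \le \|\lambda_{\phi t}\|_1 + \|\lambda_{\phi t'}\|_1$, split the double sum into a $t$-part and a $t'$-part, and bound each by its maximum over a single record:
\[
\| q(D) - q(D')\|_1 \;\le\; \sum_{j=1}^{J}\sum_{\phi\in\mathbf{\Phi}_j}\|\lambda_{\phi t}\|_1 \;+\; \sum_{j=1}^{J}\sum_{\phi\in\mathbf{\Phi}_j}\|\lambda_{\phi t'}\|_1 \;\le\; 2\max_{t}\sum_{j=1}^{J}\sum_{\phi\in\mathbf{\Phi}_j}\|\lambda_{\phi t}\|_1 .
\]
Taking the maximum of the left side over all neighboring pairs $D,D'$ yields $\Delta_f \le 2\max_{t}\sum_{j=1}^{J}\sum_{\phi\in\mathbf{\Phi}_j}\|\lambda_{\phi t}\|_1$, which is the quantity the server uses in Algorithm \ref{FuncMechs}; specializing the per-record magnitudes to the regression objectives (e.g.\ $|\lambda_{w_a t}|\le 2$, $|\lambda_{w_aw_b t}|\le 1$ for linear regression over $x\in[-1,1]^d$, $y\in[-1,1]$) then gives the corresponding closed-form values for $\Delta_f$.

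The argument is essentially two applications of the triangle inequality, so there is no deep obstacle; the points I would spell out are (i) that only coefficients of degree $j\ge 1$ are perturbed, so the constant term is legitimately dropped from the sum; (ii) that in the multiparty setting the set of coefficients to be perturbed — hence the sensitivity — is unchanged by the dissection, so the server can compute $\Delta_f$ from the input dimension $d$ and the ranges of $\mathbf{x}_i,y_i$ alone, before any party reveals anything; and (iii) being explicit that for a scalar $\lambda_{\phi t}$ the quantity $\|\lambda_{\phi t}\|_1$ is just $|\lambda_{\phi t}|$, with the full $\ell_1$ norm of the coefficient vector equal to the double sum, so that the termwise triangle inequality aggregates to the claimed bound.
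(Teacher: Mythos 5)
Your proof is correct and follows essentially the same route as the paper, which (as in the functional mechanism of Zhang et al.) bounds $\sum_{j=1}^{J}\sum_{\phi\in\mathbf{\Phi}_j}\|\lambda_{\phi t_r}-\lambda_{\phi t_r'}\|_1$ via the triangle inequality by $2\max_t\sum_{j=1}^{J}\sum_{\phi\in\mathbf{\Phi}_j}\|\lambda_{\phi t}\|_1$ inside the proof of Theorem \ref{th:s}. Your added remarks --- that the degree-$0$ term can be dropped since it does not affect the minimizer, and that the multiparty dissection merely regroups the same coefficient set --- are consistent with the paper and fill in details it leaves implicit.
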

Because the sensitivity only considers the worst case in the input space, the server can calculate the scale of noise needed for global model without getting  raw data from local parties. After   parties receive the coefficients they need to compute and    the amount of noise they need to add onto the results, each party adds Laplace noise $Lap(\frac{\Delta_f}{\varepsilon})$ to both $\boldsymbol\lambda_{\phi }^{k}$ and $\boldsymbol\lambda_{\phi }^{k*}$, and then sends noisy coefficients $\bar{\boldsymbol\lambda}_{\phi }^{k},\bar{\boldsymbol\lambda}_{\phi }^{k*}$ to server.

\subsubsection{Secure vector multiplication.}

When cross-party communication is needed, parties will not share detail data unless through secure multiparty computation methods. 
Because each $\lambda_{\phi }^{k*}=\sum_{i=1}^nv_i^k\cdot v_i^*$ is a scalar product of two vectors $\mathbf{v}^k,\mathbf{v}^*$ from party $P_k$ and $P_*$, the only secure operation required to compute $\boldsymbol\lambda_{\phi }^{k*}$ is   scalar product of two vectors.  We use the secure multiparty computation scheme by \cite{YuanY14} to handle multiparty secure vector multiplication. Each party sends the encrypted vector to  server. The server computes all scalar products without actually knowing  information in the vectors. The participating parties receive the encrypted results back and jointly decrypt the actual results. The server has zero-knowledge on the raw data during the secure vector multiplication process. Then the parties add Laplace noise $Lap(\frac{\Delta_f}{\varepsilon})$ to these cross-party coefficients and send the noisy results to server. 

The total number of operations of secure vector multiplication is $O(dJK)$, and it occurs one and only one round  at the beginning of our approach. For other approaches using secure  aggregation scheme to update gradients \cite{Hardy2017}, the number  increases by at least a magnitude of the number of iterations.

\begin{theorem}
Algorithm \ref{FuncMechs} satisfies $\varepsilon$-differential privacy regarding to $D$.
\label{th:s}
\end{theorem}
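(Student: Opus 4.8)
The plan is to reduce the theorem to the already-established privacy of the centralized functional mechanism. Concretely, I will show that the joint collection of perturbed coefficients $\{\bar{\boldsymbol\lambda}_\phi^k\}^K \cup \{\bar{\boldsymbol\lambda}_\phi^{k*}\}^K$ that Algorithm~\ref{FuncMechs} releases to the server is, as a random object, identical in distribution to the perturbed coefficient vector produced by the centralized functional mechanism run on $D$, and then invoke the Laplace mechanism together with the closure of differential privacy under post-processing. It therefore suffices to establish three facts: (i) the dissection in Equation~\ref{eq:polyfuns} is an exact partition of the monomials of $f_D(\mathbf{w})$, so that $\{\boldsymbol\lambda_\phi^k\}^K$ and $\{\boldsymbol\lambda_\phi^{k*}\}^K$ together enumerate every polynomial coefficient of $f_D$ exactly once; (ii) each such coefficient is computed at its true value $\sum_{i=1}^n \lambda_{\phi t_i}$ and perturbed exactly once by an independent draw of $Lap(\Delta_f/\varepsilon)$; and (iii) the secure vector multiplication reveals to the honest-but-curious server nothing about the local datasets beyond the (subsequently perturbed) scalar products themselves.

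For (i), I would expand $f_D(\mathbf{w})$ into monomials $\phi(\mathbf{w})\in\mathbf{\Phi}_j$ and classify each monomial by the set $S(\phi)\subseteq\{1,\dots,K\}$ of parties owning the variables that appear in it, additionally attaching the label owner $P_1$ to first-order terms $w_a$ since $\lambda_{w_a}$ depends on $y$. Monomials with $|S(\phi)|=1$ are assigned to the single-party sub-function $g_{D^k}$ of the unique party $k\in S(\phi)$; all remaining monomials go to a cross-party sub-function $h_{D^kD^*}$. This is visibly a partition, and it matches the explicit accounting given after Equation~\ref{eq:polyfuns} for linear regression (the $1+d+d^2$ coefficients split into $\lambda_{\phi_0}$, the single-party $\boldsymbol\lambda_{\mathbf{w}^1}^1$ and cross-party $\boldsymbol\lambda_{\mathbf{w}^k}^{1k}$, and the single-party $\boldsymbol\lambda_{(\mathbf{w}^k)^2}^k$ and cross-party $\boldsymbol\lambda_{\mathbf{w}^k\cdot\mathbf{w}^l}^{kl}$), and similarly for the truncated logistic objective in Equation~\ref{eq:polylog}. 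For (ii), each single-party coefficient is computed locally from $D^k$ (Line~\ref{line:ks}) and perturbed once in the following line; each cross-party coefficient is a single scalar product $\sum_i v_i^k v_i^*$ computed once via the scheme of~\cite{YuanY14} (Line~\ref{line:smc}) and perturbed once by a single designated party $P_k$ (Line~\ref{line:h}). I would explicitly flag that it is essential that exactly one of the two involved parties adds the noise, so that the effective perturbation on that coefficient is a single $Lap(\Delta_f/\varepsilon)$ rather than a sum of two such variables.

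Given (i) and (ii), the released coefficient vector equals $\big(\sum_{i=1}^n \lambda_{\phi t_i}\big)_\phi + \boldsymbol\eta$ with $\boldsymbol\eta$ having i.i.d.\ $Lap(\Delta_f/\varepsilon)$ entries — precisely the output of the centralized functional mechanism. By the preceding Lemma, the query $D\mapsto\big(\sum_{i=1}^n\lambda_{\phi t_i}\big)_\phi$ has $L_1$-sensitivity at most $\Delta_f = 2\max_t\sum_{j=1}^J\sum_{\phi\in\mathbf{\Phi}_j}\|\lambda_{\phi t}\|_1$, so by the Laplace mechanism this released vector is $\varepsilon$-differentially private with respect to $D$. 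Everything else the server obtains is a function of this vector and of the secure-multiplication transcript: by fact (iii) the transcript is simulatable from its outputs alone (semantic security of the BGN-based encryption), and the released model $\bar{\mathbf{w}}=\arg\min_{\mathbf{w}}\bar f_D(\mathbf{w})$ in Line~\ref{line:fgs} is a deterministic function of the perturbed coefficients. Hence by closure of differential privacy under post-processing, the entire protocol view, and in particular the released $\bar{\mathbf{w}}$, is $\varepsilon$-differentially private with respect to $D$.

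The main obstacle is not a single inequality but the bookkeeping in (i)–(ii): one must verify that the feature-based dissection never double-counts or drops a coefficient — including the label-dependent first-order terms and the higher-order terms spanning three or more parties — and that the protocol perturbs each coefficient exactly once with the \emph{same} scale $\Delta_f/\varepsilon$ appearing in the centralized bound. Since the functional mechanism's guarantee rests on treating the whole coefficient vector as one query of $L_1$-sensitivity $\Delta_f$, any deviation (an extra noise term, a missing coefficient, or a per-coefficient rather than per-vector sensitivity) would break the argument. A secondary point worth stating explicitly is that, in the idealized cryptographic model assumed here, the secure-multiplication layer leaks nothing, so the cryptographic secrecy and the differential-privacy guarantee compose cleanly rather than needing to be merged; the complementary per-party $\varepsilon^{(k)}$-guarantee with respect to $D^k$ is a separate claim and outside the scope of this theorem.
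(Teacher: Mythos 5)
Your proposal is correct and takes essentially the same approach as the paper: the paper's proof is precisely the explicit Laplace-density-ratio computation showing that the coefficient vector, perturbed once per coefficient by $Lap(\Delta_f/\varepsilon)$ with $\Delta_f=2\max_t\sum_{j=1}^J\sum_{\phi\in\mathbf{\Phi}_j}\|\lambda_{\phi t}\|_1$, satisfies $\varepsilon$-differential privacy --- the same fact you invoke as the Laplace mechanism applied to a query of $L_1$-sensitivity $\Delta_f$. Your extra bookkeeping (exact partition of the coefficients, exactly-once noise addition on cross-party terms, simulatability of the secure-multiplication transcript, and post-processing to $\bar{\mathbf{w}}$) is sound and simply makes explicit the distributed-to-centralized equivalence that the paper's proof leaves implicit.
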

\begin{proof}
    Assume $D$ and $D'$ are two neighbouring datasets. 
    Without loss of generality, $D$ and $D'$ differ in  row $t_r$ and $t'_r$. $\Delta_f$ is calculated by Equation \ref{eq:delta_fs}.  We have\\
\resizebox{.95\linewidth}{!}{
\begin{minipage}{\linewidth}
	\begin{equation}
	\begin{aligned}
	&\frac{\Pr\{\bar{f}(\mathbf{w})|D\}}{\Pr\{\bar{f}(\mathbf{w})|D'\}}=\frac{\prod_{j=1}^J\prod_{\phi \in \mathbf{\Phi}_j}\exp\Big(\frac{\varepsilon\big|\big|\sum_{t_i \in D}\lambda_{\phi t_i}-\bar\lambda_{\phi}\big|\big|_1}{\Delta_{{f}}}\Big)}{\prod_{j=1}^J\prod_{\phi \in \mathbf{\Phi}_j}\exp\Big(\frac{\varepsilon\big|\big|\sum_{t_i' \in D'}\lambda_{\phi t_i'}-\bar\lambda_{\phi}\big|\big|_1}{\Delta_{{f}}}\Big)}\\
	&\leq \prod\limits_{j=1}^J\prod\limits_{\phi \in \mathbf{\Phi}_j}\exp\Big(\frac{\varepsilon}{\Delta_{{f}}}\cdot\Big|\Big|\sum_{t_i \in D}\lambda_{\phi t_i}-\sum_{t_i' \in D'}\lambda_{\phi t_i'}\Big|\Big|_1\Big)\\
	\end{aligned}
	\nonumber
	\end{equation}
\end{minipage}
}
\resizebox{.9\linewidth}{!}{
\begin{minipage}{\linewidth}
	\begin{equation}
	\begin{aligned}
	&= \prod\limits_{j=1}^J\prod\limits_{\phi \in \mathbf{\Phi}_j}\exp\Big(\frac{\varepsilon}{\Delta_{{f}}}\cdot\big|\big|\lambda_{\phi t_r}-\lambda_{\phi t_r'}\big|\big|_1\Big)\\
	&= \exp\Big(\frac{\varepsilon}{\Delta_{{f}}}\cdot\sum\limits_{j=1}^J\sum\limits_{\phi \in \mathbf{\Phi}_j}\big|\big|\lambda_{\phi t_r}-\lambda_{\phi t_r'}\big|\big|_1\Big) \\
	&\leq \exp\Big(\frac{\varepsilon}{\Delta_{{f}}}\cdot 2\max\limits_t \sum_{j=1}^{J}\sum_{\phi \in \mathbf{\Phi}_j}||\lambda_{\phi t}||_1 \Big) =\exp(\varepsilon).
	\end{aligned}
	\nonumber
	\end{equation}
\end{minipage}
}
\end{proof}

\begin{table*}
\small
\centering	
\caption{Mean square error of linear regression on US and Brazil datasets under different privacy budgets $\varepsilon$ ($\delta=\tfrac{1}{n}$ for DPFW)}
\label{tbl:lin}
\begin{tabular}{|c|c|c|c|c|c|}
\hline
Data                    & $\varepsilon$ & non-private                     & DPFW-C       & DPFW-2       & FM                      \\ \hline
\multirow{3}{*}{US}     & 0.1     & \multirow{3}{*}{0.0044$\pm$0.0132} & 0.0912$\pm$0.0007 & 0.1286$\pm$0.0154 & \textbf{0.0101$\pm$0.0305} \\ \cline{2-2} \cline{4-6} 
                        & 1       &                                 & 0.0905$\pm$0.0008 & 0.1334$\pm$0.0235 & \textbf{0.0087$\pm$0.0261} \\ \cline{2-2} \cline{4-6} 
                        & 10      &                                 & 0.0903$\pm$0.0008 & 0.1329$\pm$0.0204 & \textbf{0.0086$\pm$0.0259} \\ \hline
\multirow{3}{*}{Brazil} & 0.1     & \multirow{3}{*}{0.0044$\pm$0.0132} & 0.0470$\pm$0.0007 & 0.1502$\pm$0.0921 & \textbf{0.0070$\pm$0.0230} \\ \cline{2-2} \cline{4-6} 
                        & 1       &                                 & 0.0454$\pm$0.0007 & 0.1989$\pm$0.1390 & \textbf{0.0045$\pm$0.0136} \\ \cline{2-2} \cline{4-6} 
                        & 10      &                                 & 0.0453$\pm$0.0006 & 0.2013$\pm$0.1469 & \textbf{0.0044$\pm$0.0132} \\ \hline
\end{tabular}
\end{table*}

In Theorem \ref{th:s}, Algorithm \ref{FuncMechs} satisfies $\varepsilon$-differential privacy regarding $D$, which is the same as the centralized scenario. In comparison to the centralized functional mechanism, the proposed framework adds the same amount of noise to achieve $\varepsilon$-differential privacy and uses secure vector multiplication to achieve the same utility. In comparison to the methods that add noise onto gradients for each iteration, our framework only needs one round of noise addition and one round of secure multiparty computation.
\begin{claim}
Algorithm \ref{FuncMechs} achieves the same utility under the multiparty setting in comparison to the centralized setting. The utility of Algorithm \ref{FuncMechs}  does not change along with the number of participating parties $K$.
\end{claim}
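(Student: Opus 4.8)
The plan is to show that the random perturbed objective function $\bar f_D(\mathbf{w})$ produced by Algorithm~\ref{FuncMechs} has exactly the same probability law as the one produced by the centralized functional mechanism; equality of utility then follows immediately, since $\bar{\mathbf{w}}=\arg\min_{\mathbf{w}}\bar f_D(\mathbf{w})$ and any utility measure (e.g.\ expected mean-square error) is a functional of the distribution of $\bar{\mathbf{w}}$.

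First I would argue that the allocation performed on Lines~\ref{line:f}--\ref{line:f2} is an \emph{exact partition} of the coefficient set. By Equation~\ref{eq:polyfuns} together with the polynomial expansions of $g_{D^k}$ and $h_{D^kD^*}$, every monomial $\phi(\mathbf{w})$ with $\phi\in\mathbf{\Phi}_j$, $0\le j\le J$, appears in exactly one sub-function: it is single-party when all of its variables lie in one block $\mathbf{X}^k$ (this also covers the constant term and, for the label owner $P_1$, the first-order terms it can assemble alone), and cross-party when its variables span at least two feature blocks or it is a first-order term that requires the label held by $P_1$. Hence, as a multiset, $\{\boldsymbol\lambda_{\phi}^{k}\}^{K}\cup\{\boldsymbol\lambda_{\phi}^{k*}\}^{K}$ is precisely the coefficient set $\{\lambda_\phi:\phi\in\mathbf{\Phi}_j,\,0\le j\le J\}$ used in the centralized Equation~\ref{eq:polyfun}, with no coefficient duplicated or omitted.

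Second, I would verify that each coefficient is reconstructed to its \emph{exact} value and perturbed with exactly one Laplace draw of the correct scale. For single-party coefficients this is clear, since $\lambda_\phi^{k}=\sum_{i=1}^n\lambda_{\phi t_i}^{k}$ is computed locally from $D^k$. For cross-party coefficients, each $\lambda_\phi^{k*}=\sum_{i=1}^n v_i^k v_i^*$ is a scalar product of two vectors, and the secure vector multiplication of \cite{YuanY14} returns this value exactly (it is an encryption--aggregation--decryption protocol, not an approximation), so the distributed computation introduces no extra error. In both cases the owning party adds a single independent sample from $Lap(\Delta_f/\varepsilon)$, and $\Delta_f$ computed on Line~\ref{line:delta} via Equation~\ref{eq:delta_fs} equals the centralized global sensitivity established in the preceding lemma. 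Therefore each $\bar\lambda_\phi$ has the same marginal law as in the centralized mechanism; since the noise draws are mutually independent across coefficients in both settings, the joint law of $(\bar\lambda_\phi)_\phi$, and hence of $\bar f_D(\mathbf{w})=\sum_{j=0}^J\sum_{\phi\in\mathbf{\Phi}_j}\bar\lambda_\phi\,\phi(\mathbf{w})$ as assembled on Line~\ref{line:fgs}, coincides with the centralized one. Consequently $\bar{\mathbf{w}}$ is identically distributed, and its expected utility is identical.

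For the independence from $K$: the set of monomials $\mathbf{\Phi}_j$ and the coefficients $\lambda_\phi$ depend only on the model form and the total dimension $d$, not on how the $d$ features are split into the blocks $\mathbf{X}^1,\dots,\mathbf{X}^K$. Varying $K$ only changes which party computes and perturbs a given coefficient, not its value or the law of the noise added to it, so by the argument above the reconstructed $\bar f_D$ is invariant in distribution as $K$ varies. I expect the only delicate step to be the partition argument of the second paragraph: pinning down the bookkeeping so that a degree-$j$ monomial spanning three or more parties is assigned once, and a second-order cross term $w_a^k w_b^l$ is handled by exactly one pair $(P_k,P_l)$ and is not also counted inside some $g_{D^k}$. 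Once that is settled, the remainder is a direct "same distribution, hence same utility'' argument requiring no computation.
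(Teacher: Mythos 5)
Your proposal is correct and follows essentially the same reasoning as the paper, which states this claim without a formal proof and justifies it only by the observation that the framework adds the same per-coefficient Laplace noise $Lap(\Delta_f/\varepsilon)$ as the centralized functional mechanism while the secure vector multiplication recovers each cross-party coefficient exactly. Your formalization via equality in distribution of the reassembled $\bar f_D(\mathbf{w})$ (exact partition of the $1+d+d^2$ coefficients, one independent noise draw per coefficient, hence identically distributed $\bar{\mathbf{w}}$, and $K$-invariance because the coefficient set depends only on $d$ and the model) is a faithful and more rigorous rendering of exactly that argument.
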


\subsubsection{Differential privacy for local parties.}
Each party  $P_k$ cares about all the coefficients that involve $D^k$, i.e.  $\boldsymbol\lambda_{\phi }^{k}$ in $g$ and $ \boldsymbol\lambda_{\phi }^{k*}$ in $h$. 
\begin{lemma}
The sensitivity of ${f}_D(\mathbf{w})$  regarding to $D^{k}$ is 
\begin{equation}
\label{eq:delta_fAs}
\Delta_f^{k}=2\max\limits_{t} \sum_{j=1}^{J}\sum_{\phi \in \mathbf{\Phi}_j^{(k)}}||\lambda^{(k)}_{\phi t}||_1,
\end{equation}
where $\lambda^{(k)}_{\phi}$ indicates either $\lambda_{\phi }^{k}$ or $\lambda_{\phi }^{k*}$.
\end{lemma}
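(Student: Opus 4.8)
The plan is to adapt the global-sensitivity argument behind Lemma~1 (and the proof of Theorem~\ref{th:s}) to the restricted family of coefficients that a change in party $P_k$'s local data can affect. First I would fix $k$ and call two datasets $D,D'$ \emph{$k$-neighboring} if they are identical except that one user's record differs and the difference is confined to the features in $\mathbf{X}^k$ held by $P_k$ (together with the label when $k=1$). Using the dissection $f_D(\mathbf{w}) = \sum_{l} g_{D^l} + \sum_{l,*} h_{D^lD^*}$ from Equation~\ref{eq:polyfuns}, I would observe that every sub-function not touching $P_k$ is computed entirely from data that is unchanged between $D$ and $D'$, so the only polynomial coefficients that can differ are those indexed by $\mathbf{\Phi}_j^{(k)}$: the single-party coefficients $\lambda_\phi^{k}$ of $g_{D^k}$ and the cross-party coefficients $\lambda_\phi^{k*}$ of the $h$-terms involving $P_k$ (the $P_*$ half of such a scalar product being held fixed). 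This localizes the entire analysis to $\bigcup_j \mathbf{\Phi}_j^{(k)}$.

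The second step is the sensitivity estimate itself, now mirroring Lemma~1 term by term but only over $\phi \in \mathbf{\Phi}_j^{(k)}$. Since each coefficient is additive over records, $\lambda_\phi^{(k)} = \sum_{i=1}^n \lambda_{\phi t_i}^{(k)}$, and $D,D'$ differ only in record $t_r$ versus $t_r'$, we get $\lambda_\phi^{(k)}(D) - \lambda_\phi^{(k)}(D') = \lambda_{\phi t_r}^{(k)} - \lambda_{\phi t_r'}^{(k)}$. Summing $||\cdot||_1$ over all $j\ge 1$ and all $\phi\in\mathbf{\Phi}_j^{(k)}$, applying the triangle inequality $||\lambda_{\phi t_r}^{(k)} - \lambda_{\phi t_r'}^{(k)}||_1 \le ||\lambda_{\phi t_r}^{(k)}||_1 + ||\lambda_{\phi t_r'}^{(k)}||_1$, and bounding each of the two resulting sums by $\max_t \sum_{j=1}^{J}\sum_{\phi\in\mathbf{\Phi}_j^{(k)}} ||\lambda_{\phi t}^{(k)}||_1$ yields $\Delta_f^k \le 2\max_t \sum_{j=1}^{J}\sum_{\phi\in\mathbf{\Phi}_j^{(k)}} ||\lambda_{\phi t}^{(k)}||_1$, exactly Equation~\ref{eq:delta_fAs}. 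As in Lemma~1 the degree-$0$ term is omitted since it carries no $\mathbf{w}$ and never affects $\bar{\mathbf{w}}$, which is why the sum starts at $j=1$; one may also note $\Delta_f^k \le \Delta_f$ as a consistency check.

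Finally I would argue tightness to upgrade the inequality to an equality: choose $t_r$ attaining the maximum and construct an admissible $t_r'$ whose relevant monomial coefficients have the opposite sign, which is feasible because $x_{ia}\in[-1,1]$ and $y\in[-1,1]$ (resp. $\{0,1\}$), so the per-record coefficients of the linear and logistic objectives can be flipped by a legal record; the triangle inequality is then met with equality and the bound is attained. The step I expect to be the main obstacle is the bookkeeping in the first paragraph, namely verifying precisely that a one-record change in $D^k$ perturbs exactly the coefficients in $\mathbf{\Phi}_j^{(k)}$ and leaves every coefficient outside it untouched, and checking that this structural claim holds uniformly, including the label-owner case $k=1$ where the changed record also carries $y_r$. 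Once that is pinned down, the remainder is the same triangle-inequality computation already used to derive $\Delta_f$.
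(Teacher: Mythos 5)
Your first two paragraphs are essentially the paper's own argument: the paper never proves this lemma separately, but the identical computation --- additivity of each coefficient over records, so that a change in one record of $D^{k}$ alters only the coefficients indexed by $\mathbf{\Phi}_j^{(k)}$ (single-party $\lambda^k_\phi$ and cross-party $\lambda^{k*}_\phi$, the other party's factor held fixed), followed by the triangle inequality $\|\lambda^{(k)}_{\phi t_r}-\lambda^{(k)}_{\phi t'_r}\|_1\le\|\lambda^{(k)}_{\phi t_r}\|_1+\|\lambda^{(k)}_{\phi t'_r}\|_1$ and the worst-case bound $2\max_t\sum_{j=1}^{J}\sum_{\phi\in\mathbf{\Phi}_j^{(k)}}\|\lambda^{(k)}_{\phi t}\|_1$ --- is exactly the computation carried out inside the proof of Theorem \ref{th:As}, and it is all that is needed. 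Your third paragraph, however, overreaches: the sign-flipping record you invoke generally does not exist, because some per-record coefficients are nonnegative for every admissible record --- e.g.\ the linear-regression coefficient of $w_a^2$ receives $x_{ra}^2\in[0,1]$ per record, and the logistic degree-two coefficients receive $\tfrac{1}{8}x_{ra}x_{rb}$ including the case $a=b$ --- so the triangle inequality cannot hold with equality for all $\phi$ while both $t_r$ and $t'_r$ attain the maximum, and the quantity in Equation \ref{eq:delta_fAs} is in general a strict upper bound on the true sensitivity rather than its exact value. This does not harm the result: as in the original functional mechanism, the lemma is used only to calibrate the Laplace noise, and an upper bound on sensitivity is all that the privacy guarantee requires (this is how the stated ``equality'' should be read, here and in Lemma 1). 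Drop the tightness claim; the remainder is correct and coincides with the paper's implicit proof.
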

The availability of label makes significant difference in $\Delta_f^{k}$. Because  party $P_1$    owns the label and the label  is granted access to other parties, there are more cross-party coefficients for  $P_1$ than other parties, which increases $\Delta_f^{1}$.

\begin{theorem}
Algorithm \ref{FuncMechs} satisfies $\varepsilon^{(k)}$-differential privacy regarding to $D^{k}$, where $\varepsilon^{(k)}=\frac{\Delta_f^{k}}{\Delta_f}\varepsilon$.
\label{th:As}
\end{theorem}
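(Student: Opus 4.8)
The plan is to reuse the argument of Theorem~\ref{th:s} almost verbatim, but restricted to the polynomial coefficients that actually depend on $D^{k}$. First I would fix a party $P_k$ and take two datasets $D$ and $D'$ that are neighbouring \emph{with respect to} $D^{k}$: they agree on every record and on every other party's columns, and differ only in the $P_k$-held portion of one record, call it $t_r$ in $D$ and $t_r'$ in $D'$ (when $k=1$ this portion also carries the label). The structural observation that makes everything work is that such a change leaves every coefficient not involving $P_k$ \emph{exactly} unchanged: all single-party coefficients $\boldsymbol\lambda_\phi^{l}$ with $l\neq k$ and all cross-party coefficients $\boldsymbol\lambda_\phi^{l*}$ that do not reference $P_k$ are computed from data disjoint from $D^{k}$. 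Hence in the density ratio $\Pr\{\bar f(\mathbf{w})\mid D\}/\Pr\{\bar f(\mathbf{w})\mid D'\}$ all the corresponding Laplace factors cancel, and only the coefficients collected in $\boldsymbol\lambda_\phi^{k}$ and $\boldsymbol\lambda_\phi^{k*}$ — together the family $\lambda^{(k)}_{\phi}$ ranging over $\phi\in\mathbf\Phi_j^{(k)}$ from the sensitivity lemma — can move.

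Next I would run the same triangle-inequality manipulation as in Theorem~\ref{th:s} on this restricted product. Each surviving coefficient is released with independent $Lap(\Delta_f/\varepsilon)$ noise added by $P_k$ (before upload for the single-party ones, and after the secure product of step~(3) for the cross-party ones), so the bound becomes
\[
\frac{\Pr\{\bar{f}(\mathbf{w})\mid D\}}{\Pr\{\bar{f}(\mathbf{w})\mid D'\}}
\;\le\; \exp\Big(\frac{\varepsilon}{\Delta_f}\sum_{j=1}^{J}\sum_{\phi\in\mathbf\Phi_j^{(k)}}\big|\big|\lambda^{(k)}_{\phi t_r}-\lambda^{(k)}_{\phi t_r'}\big|\big|_1\Big).
\]
Upper-bounding the inner double sum by its worst case over a single record, namely $2\max_{t}\sum_{j=1}^{J}\sum_{\phi\in\mathbf\Phi_j^{(k)}}||\lambda^{(k)}_{\phi t}||_1=\Delta_f^{k}$ by~(\ref{eq:delta_fAs}), gives a ratio of at most $\exp(\Delta_f^{k}\varepsilon/\Delta_f)=\exp(\varepsilon^{(k)})$, which is exactly the asserted $\varepsilon^{(k)}$-differential privacy.

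Finally I would close the argument by observing that no information about $D^{k}$ leaks outside these noisy coefficients: the only other messages depending on $D^{k}$ are the ciphertexts submitted to the secure vector multiplication of step~(3), and by the security of the scheme of \cite{YuanY14} the server's (and colluding parties') view of them is independent of the raw entries of $D^{k}$, so the privacy level is governed entirely by the perturbed coefficients analysed above. The step I expect to be the main obstacle is the bookkeeping of which coefficients are charged to $D^{k}$: a cross-party coefficient such as $-2\sum_i y_i x_{ia}^{k}$ with $k\neq1$ depends on $P_k$'s feature through $x_{ia}^{k}$ and so must be counted in $\mathbf\Phi_j^{(k)}$, whereas the intercept coefficient $\lambda_{\phi_0}=\sum_i (y_i)^2$ is charged only to $D^{1}$; pinning down this partition correctly is precisely what yields $\Delta_f^{k}\le\Delta_f$ and hence $\varepsilon^{(k)}\le\varepsilon$, matching the design goal stated in the problem.
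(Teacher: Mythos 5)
Your proposal is correct and follows essentially the same route as the paper's own proof: restrict the Laplace density-ratio argument of Theorem~\ref{th:s} to the coefficients $\lambda^{(k)}_{\phi}$ with $\phi\in\mathbf{\Phi}_j^{(k)}$ that depend on $D^{k}$, apply the triangle inequality, and bound the sum by $\Delta_f^{k}$ from Equation~(\ref{eq:delta_fAs}) to get the ratio $\exp(\tfrac{\Delta_f^{k}}{\Delta_f}\varepsilon)$. The extra points you make explicit --- that coefficients not touching $P_k$ cancel in the ratio, and that the secure vector multiplication of \cite{YuanY14} reveals nothing beyond the noisy coefficients --- are left implicit in the paper but are consistent with it and strengthen the write-up.
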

\begin{proof}
	Assume that $D^{k}$ and ${D^{k}}'$ are two neighbouring datasets. 
	Without loss of generality, $D^{k}$ and ${D^{k}}'$ differ in  row $t_r$ and ${t'_r}$. $\Delta_f^{k}$ is calculated by Equation \ref{eq:delta_fAs}. 
	We have\\
\resizebox{.9\linewidth}{!}{
\begin{minipage}{\linewidth}
	\begin{equation}
	\begin{aligned}
	&\frac{\Pr\{\bar{f}(\mathbf{w})|D^{k}\}}{\Pr\{\bar{f}(\mathbf{w})|{D^{k}}'\}} =\frac{\prod\limits_{j=1}^J\prod\limits_{\phi \in \mathbf{\Phi}^{(k)}_j}\exp\Big(\frac{\varepsilon\big|\big|\sum_{t_i \in D^{k}}\lambda^{(k)}_{\phi t_i}-\bar\lambda^{(k)}_{\phi}\big|\big|_1}{\Delta_{{f}}}\Big)}{\prod\limits_{j=1}^J\prod\limits_{\phi \in \mathbf{\Phi}^{(k)}_j}\exp\Big(\frac{\varepsilon\big|\big|\sum_{{t'_i} \in {D^{k}}'}\lambda^{(k)}_{\phi {t'_i}}-\bar\lambda^{(k)}_{\phi}\big|\big|_1}{\Delta_{{f}}}\Big)}\\
	&\leq \exp\Big(\frac{\varepsilon}{\Delta_{{f}}}\cdot 2\max\limits_{t} \sum_{j=1}^{J}\sum_{\phi \in \mathbf{\Phi}^{(k)}_j}||\lambda^{(k)}_{\phi t}||_1 \Big) =\exp(\frac{\Delta_f^{k}}{\Delta_f}\varepsilon).
	\end{aligned}
	\nonumber
	\end{equation}
\end{minipage}
}
\end{proof}

To build an $\varepsilon$-differential privacy global model, the local party $P_k$ can achieve 
$\varepsilon^{(k)}$-differential privacy, where $\varepsilon^{(k)}=\frac{\Delta_f^{k}}{\Delta_f}\varepsilon < \varepsilon$, which means stronger privacy guarantee.

Again, take linear regression for an example. To achieve $\varepsilon$-differential privacy regarding to $D$, functional mechanism adds $Lap(\frac{\Delta_f }{\varepsilon})$ noise to polynomial coefficients. The global sensitivity of ${f}_D(\mathbf{w})$ regarding to $D$ is $\Delta_f=2(1+2d+d^2)$.
Party $P_1$ cares about $\lambda^1_{{\phi_0 } },\boldsymbol\lambda_{ {\mathbf{w}^1} }^{1}, \boldsymbol\lambda_{ {\mathbf{w}^k}}^{1k}, \boldsymbol\lambda_{ {(\mathbf{w}^1)}^2}^{1}$ and  $\boldsymbol\lambda_{{\mathbf{w}^1\cdot\mathbf{w}^{k}}}^{1k} (k\neq 1)$.
So the sensitivity of ${f}_D(\mathbf{w})$  regarding to $D^{1}$ is $\Delta_f^{1}=2(1+2d^{1}+2{(d-d^{1})}+{(d^{1})}^2+{{d^{1}}{(d-d^{1})}})= 2(1+2d+d^{1}d)$.
For party $P_1$, Algorithm \ref{FuncMechs} achieves $(\frac{\Delta_f^{1}}{\Delta_f}\varepsilon)$-differential privacy regarding to $D^{1}$.
Party $P_k (k\neq 1)$  cares about $\boldsymbol\lambda_{ {\mathbf{w}^k}}^{1k}, \boldsymbol\lambda_{{(\mathbf{w}^k)}^2}^{k}$ and  $\boldsymbol\lambda_{{\mathbf{w}^k\cdot\mathbf{w}^{l}}}^{kl}$.
So the sensitivity of ${f}_D(\mathbf{w})$  regarding to $D^k (k\neq 1)$ is  $\Delta_f^{k}=2(2d^{k}+{({d^{k})}^2}+{{d^{k}}{(d-d^{k})}})= 2(2d^{k}+d^{k}d)$.
For party $P_k (k\neq 1)$, Algorithm \ref{FuncMechs} achieves $(\frac{\Delta_f^{k}}{\Delta_f}\varepsilon)$-differential privacy regarding to $D^{k}$.
The interesting observation is that: when party $P_1$ shares label with other parties,  $P_1$ has $4(d-d^{1})$ more sensitivity. $\Delta_f^{k}$ for other parties does not change as the label $|y|\leq 1$. Thus, cross-party communication only costs  $P_1$ extra sensitivity. By sharing the label, party $P_1$ achieves relatively weaker privacy in comparison to  other parties.

\begin{table*}
\small
\caption{Mean square error of linear regression on synthetic datasets under different sparsity $s$ ($\delta=\tfrac{1}{n}$ for DPFW, $\varepsilon=1$)}
\label{tbl:lins}
	\centering	
	\begin{tabular}{|c|c|c|c|c|c|c|}
\hline
$s$ & non-private    & DPFW-C       & DPFW-2       & DPFW-4       & DPFW-8       & FM                      \\ \hline
0.1      & 0.0033$\pm$0.0101 & 0.5487$\pm$0.1907 & 1.2980$\pm$0.4537 & 2.2719$\pm$0.7423 & 4.6538$\pm$2.0663 & \textbf{0.0035$\pm$0.0107} \\ \hline
0.5      & 0.0052$\pm$0.0157 & 40.392$\pm$5.132  & 85.264$\pm$10.560 & 164.95$\pm$19.60  & 338.16$\pm$31.41  & \textbf{0.0058$\pm$0.0174} \\ \hline
1.0        & 0.0050$\pm$0.0154 & 271.09$\pm$27.75  & 638.66$\pm$59.70  & 1206.4$\pm$66.2   & 2441.0$\pm$174.2  & \textbf{0.0056$\pm$0.0172} \\ \hline
\end{tabular}
\end{table*}

\begin{table*}
\small
\caption{Classification accuracy of logistic regression on Adult and Dutch datasets under different privacy budget  $\varepsilon$}
\label{tbl:log}
	\centering	
\begin{tabular}{|c|c|c|c|c|c|c|}
\hline
\multirow{2}{*}{$\varepsilon$} & \multicolumn{3}{c|}{Adult}                                                 & \multicolumn{3}{c|}{Dutch}                                                 \\ \cline{2-7} 
                         & non-private                     & DPSGD          & FM                      & non-private                     & DPSGD          & FM                      \\ \hline
0.1                      & \multirow{3}{*}{0.8368$\pm$0.0029} & 0.6000$\pm$0.0738 & \textbf{0.6412$\pm$0.1463} & \multirow{3}{*}{0.8303$\pm$0.0040} & 0.5060$\pm$0.0684 & \textbf{0.5783$\pm$0.0572} \\ \cline{1-1} \cline{3-4} \cline{6-7} 
1                        &                                 & 0.6956$\pm$0.0229 & \textbf{0.7315$\pm$0.0379} &                                 & 0.6867$\pm$0.0373 & \textbf{0.7166$\pm$0.0489} \\ \cline{1-1} \cline{3-4} \cline{6-7} 
10                       &                                 & 0.8023$\pm$0.0071 & \textbf{0.8132$\pm$0.0231} &                                 & 0.8003$\pm$0.0182 & \textbf{0.8105$\pm$0.0086} \\ \hline
\end{tabular}
\end{table*}

\subsection{Application to Logistic Regression}
For logistic regression, to achieve $\varepsilon$-differential privacy, the functional mechanism adds $Lap(\frac{\Delta_f}{\varepsilon})$ noise to the polynomial coefficients in Equation \ref{eq:polylog}.
More specifically, $\boldsymbol\lambda_{ \mathbf{\Phi}_1}$ contains $\boldsymbol\lambda_{  {\mathbf{w}^1}}^{1}= \sum\limits_{i=1}^n 
\big(\tfrac{f_1^{(1)}(0)}{1!}-y_i\big)\mathbf{x}^{1}_i$ from   $P_1$ and $\boldsymbol\lambda_{ {\mathbf{w}^k}}^{1k}=\sum\limits_{i=1}^n
\big(\tfrac{f_1^{(1)}(0)}{1!}-y_i\big)\mathbf{x}^{k}_i$ where $P_k (k\neq 1)$ does not own the label  and need $(\frac{f_1^{(1)}(0)}{1!}-y_i)$ from   $P_1$.  
$\boldsymbol\lambda_{  \mathbf{\Phi}_2}  $ contains $\boldsymbol\lambda_{ {(\mathbf{w}^k)}^2}^{k} = \sum\limits_{i=1}^n 
\tfrac{f_1^{(2)}(0)}{2!}\left(\mathbf{x}^{k}_i\right)^2 $ from   $P_k$ and $\boldsymbol\lambda_{\mathbf{w}^k\cdot\mathbf{w}^{l}}^{kl}= \sum\limits_{i=1}^n 
\tfrac{f_1^{(2)}(0)}{2!}\mathbf{x}^{k}_i\cdot\mathbf{x}^{l}_i$ from $P_k,P_l$. 

To build  the global model, the derived $\bar{\mathbf{w}}$ satisfies $\varepsilon$-differential privacy regarding to $D$ by applying Algorithm \ref{FuncMechs}.
For   $P_1$,   $\Delta_f^{1}=d+ \frac{{d^{1}}({2d}-{d^{1}})}{4}$. 
For   $P_k (k\neq 1)$,   $\Delta_f^{k}=d^{k}+\frac{d^k{(2d-d^{k})}}{4}$. 
Algorithm \ref{FuncMechs} achieves $(\frac{\Delta_f^{1}}{\Delta_f}\varepsilon)$-differential privacy regarding to $D^{1}$ and  $(\frac{\Delta_f^{k}}{\Delta_f}\varepsilon)$-differential privacy regarding to $D^{k}$.
When   $P_1$ shares label with other parties,   $P_1$ has  $(d-d^{1})$ more sensitivity, and $\Delta_f^{k}$ for other parties does not change as the label $y\in\{0,1\}$  does not change $|(\frac{f_1^{(1)}(0)}{1!}-y)|=\frac{1}{2}$.

\subsection{Extension to the Bottom-up Case}
So far, we have discussed the framework from the top-down case, where the server selects the privacy budget to achieve on the whole dataset and informs the parties the scale of noise based on the global sensitivity of objective function. We can also achieve differential privacy from the bottom-up case, where each party selects the level of differential privacy $\varepsilon^{(k)}$ they want to achieve for their sub-dataset and the server adjusts    $\varepsilon$ accordingly. In practice, the choice between top-down and bottom-up approaches depends on the agreement between server and parties.

In the bottom-up case, each party $P_k$ splits their privacy budget $\varepsilon^{(k)}$ onto sending ${\boldsymbol\lambda}_{\phi }^{k}$ and ${\boldsymbol\lambda}_{\phi }^{k*}$ in a differentially private manner, i.e.   $\varepsilon^{(k)}=\varepsilon^k+\sum_{l=1}^K\varepsilon^{kl}$, where  $\varepsilon^k$ is the privacy budget for   $\boldsymbol\lambda_{\phi }^{k}$, and $\varepsilon^{kl}$ is the privacy budget for    $ \boldsymbol\lambda_{\phi }^{kl} \subset \boldsymbol\lambda_{\phi }^{k*}$. $P_k$ and $P_l$ jointly decide $\varepsilon^{kl}$.  The sensitivity of $\boldsymbol\lambda_{\phi }^{k}$ is $\Delta_g^{k}=2\max\limits_{t} \sum_{j=1}^{J}\sum_{\phi \in \mathbf{\Phi}_j^{k}}||\lambda^{k}_{\phi t}||_1$. The sensitivity of $ \boldsymbol\lambda_{\phi }^{kl}$ is $\Delta_h^{kl}=2\max\limits_{t} \sum_{j=1}^{J}\sum_{\phi \in \mathbf{\Phi}_j^{kl}}||\lambda^{kl}_{\phi t}||_1$.
\begin{corollary}
The global model achieves $\varepsilon$-differential privacy regarding to $D$ in the bottom-up case, where
\begin{equation}
\begin{aligned}
    \varepsilon &= \sum\limits_{k=1}^K \frac{\Delta_f}{\Delta_g^{k}}\varepsilon^k+ \sum\limits_{1\leq k,l\leq K} \frac{\Delta_f}{\Delta_h^{kl}}\varepsilon^{kl}.
\end{aligned}
% \label{eq:polyfuns}
\nonumber
\end{equation}
\end{corollary}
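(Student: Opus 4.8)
The plan is to treat the bottom-up instance of Algorithm~\ref{FuncMechs} as a composition of independent \emph{block} mechanisms and to close by sequential composition of differential privacy, reusing the likelihood-ratio estimate from the proof of Theorem~\ref{th:s} on each block. First note that the released $\bar f_D(\mathbf{w})$ --- and hence $\bar{\mathbf w}=\arg\min_{\mathbf w}\bar f_D(\mathbf w)$, by post-processing invariance --- is a deterministic function of the perturbed coefficient blocks: for each party $P_k$, one single-party block $\bar{\boldsymbol\lambda}^k_\phi$, perturbed by $P_k$ with Laplace noise at the scale the protocol assigns from the budget $\varepsilon^k$ and the sensitivity $\Delta_g^k$; and for each interacting pair $\{k,l\}$, one cross-party block $\bar{\boldsymbol\lambda}^{kl}_\phi$, perturbed after the secure vector multiplication with Laplace noise at the scale assigned from $\varepsilon^{kl}$ and $\Delta_h^{kl}$. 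Because the noise in distinct blocks is drawn independently, the ratio $\Pr\{\bar f(\mathbf w)|D\}/\Pr\{\bar f(\mathbf w)|D'\}$ factorizes into a product of per-block ratios, so it is enough to bound each factor.

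Second, I would bound one per-block factor with respect to the \emph{whole} dataset $D$. Fix neighbors $D,D'$ differing only in the $r$-th record, $t_r$ vs.\ $t'_r$. The single-party block $\bar{\boldsymbol\lambda}^k_\phi$ depends only on $D^k$, and $D^k$ differs from ${D^k}'$ in at most that one record, so by the definition of $\Delta_g^k$ the total $\ell_1$ change satisfies $\sum_{j\ge1}\sum_{\phi\in\mathbf{\Phi}^k_j}||\lambda^k_{\phi t_r}-\lambda^k_{\phi t'_r}||_1\le\Delta_g^k$; substituting this into the Laplace-density manipulation in the proof of Theorem~\ref{th:s} bounds this block's factor by $\exp\!\big(\tfrac{\Delta_f}{\Delta_g^k}\varepsilon^k\big)$. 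The same computation for the cross-party block $\bar{\boldsymbol\lambda}^{kl}_\phi$, which depends only on $D^k$ and $D^l$ and whose one-record $\ell_1$ change is at most $\Delta_h^{kl}$, bounds its factor by $\exp\!\big(\tfrac{\Delta_f}{\Delta_h^{kl}}\varepsilon^{kl}\big)$.

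Third, multiplying the per-block bounds (legitimate by block-independence of the noise) gives $\Pr\{\bar f(\mathbf w)|D\}/\Pr\{\bar f(\mathbf w)|D'\}\le\exp\!\big(\sum_{k=1}^K\tfrac{\Delta_f}{\Delta_g^k}\varepsilon^k+\sum_{1\le k,l\le K}\tfrac{\Delta_f}{\Delta_h^{kl}}\varepsilon^{kl}\big)=\exp(\varepsilon)$, which is the corollary; equivalently, steps two and three together are just sequential composition of the block releases, each viewed as a differentially private release of its own slice of the polynomial coefficients. I expect the real difficulty to be the accounting rather than any single inequality: one must check that the single-party and cross-party blocks jointly partition every polynomial coefficient of $f$ (so the product of per-block ratios genuinely reconstitutes the full ratio), that each cross-party pair $\{k,l\}$ is charged exactly once, and --- most delicately --- that the Laplace scale the bottom-up protocol actually places on each block is precisely the one that turns the generic per-block estimate $\exp\!\big(\tfrac{1}{b}(\text{block }\ell_1\text{ change})\big)$ into the normalized forms $\exp\!\big(\tfrac{\Delta_f}{\Delta_g^k}\varepsilon^k\big)$ and $\exp\!\big(\tfrac{\Delta_f}{\Delta_h^{kl}}\varepsilon^{kl}\big)$, i.e.\ that ``$\varepsilon^k$ is the budget for $\boldsymbol\lambda^k_\phi$'' is read consistently with the global-sensitivity normalization $\Delta_f$ used elsewhere in the paper.
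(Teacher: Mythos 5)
The paper gives no proof of this corollary, so there is nothing to compare your argument against line by line; your overall strategy --- view the release as independent per-block Laplace mechanisms (one block $\bar{\boldsymbol\lambda}^k_\phi$ per party, one block $\bar{\boldsymbol\lambda}^{kl}_\phi$ per interacting pair), bound each block's likelihood ratio as in Theorem~\ref{th:s}, and multiply --- is the natural and essentially the only sensible route, and your accounting concerns (the blocks partition the coefficients, each pair is charged once, post-processing handles $\bar{\mathbf w}$) are the right ones to raise.

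However, your second step contains a genuine non sequitur, and it sits exactly at the point you yourself flag as ``most delicate.'' You state that block $k$'s one-record $\ell_1$ change is at most $\Delta_g^k$ and that the protocol perturbs it with Laplace noise of scale $\Delta_g^k/\varepsilon^k$ (the scale that makes the block release $\varepsilon^k$-DP, which is the only reading consistent with ``$\varepsilon^k$ is the privacy budget for $\boldsymbol\lambda^k_\phi$''). Plugging these into the Laplace-density manipulation gives the factor $\exp\bigl(\tfrac{\varepsilon^k}{\Delta_g^k}\cdot\Delta_g^k\bigr)=\exp(\varepsilon^k)$, not $\exp\bigl(\tfrac{\Delta_f}{\Delta_g^k}\varepsilon^k\bigr)$; the $\Delta_f$ in your claimed per-block bound has no source in the premises you state. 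Carrying the correct factor through composition yields $\varepsilon=\sum_k\varepsilon^k+\sum_{k,l}\varepsilon^{kl}$, which is \emph{smaller} than the corollary's expression (since $\Delta_f\ge\Delta_g^k$ and $\Delta_f\ge\Delta_h^{kl}$, as the global sensitivity sums over a superset of coefficients). So the corollary's stated $\varepsilon$ is reachable from your argument only by monotonicity of differential privacy --- i.e., the mechanism satisfies the tighter bound, hence also the looser one --- or, equivalently, by wastefully bounding each block's per-record change by $\Delta_f$ instead of by its own $\Delta_g^k$ or $\Delta_h^{kl}$. To make your proof valid you must say one of these two things explicitly; as written, the displayed per-block bound does not follow from the displayed premises, and a reader cannot tell whether you are proving the corollary's formula as an identity for the privacy loss (it is not one) or as an upper bound obtained from a strictly better one.
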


\subsection{Discussion}
The objective of our work is to preserve differential privacy for regression models trained on vertically partitioned data. We have theoretical proofs (Theorems \ref{th:s} and \ref{th:As} ) that our algorithm guarantees to satisfy differential privacy. We protect differential privacy of the whole training data such that attackers cannot derive the presence or absence of any single record (with all feature values and label) in the training data from the released jointly-learnt regression model (as shown in Theorem \ref{th:s}). As training data is vertically split into $K$ parties, we further show in Theorem \ref{th:As} each party $k$ also achieves differential privacy against attackers regarding to its own data $D^k$. 

Our contribution is that we add less noise than state-of-the-art approaches to achieve the same level of differential privacy, e.g., our approach reduces noise addition by a magnitude of the number of total iterations compared with gradient perturbation approaches. Furthermore, the secure vector computation also protects the disclosure of raw data between the server and the participating parties. We use the standard secure vector multiplication under the semi-honest model in our framework. The number of inner products in our algorithm is bounded by $d^2$.

\section{Experiments}

We evaluate our proposed framework of achieving differential privacy in vertically partitioned multiparty learning based on functional mechanism (FM) for linear regression and logistic regression.
\subsection{Experiment Setup}

\subsubsection{Dataset.} 
For linear regression, we evaluate on US and Brazil \cite{ipums} datasets.  US has 370,000 records and 14 features and  Brazil has 190,000 and 14 features. We also evaluate on three synthetic datasets that are sparse and high dimensional. All three synthetic datasets have 80,000 records and 800 features and their sparsity values are  $s=0.1,0.5,1.0$, respectively. The sparsity here is both the ratio of nonzero entries in datasets and the ratio of non-zero ground-truth parameters.   For logistic regression, we evaluate on Adult \cite{Dua:2017} and Dutch \cite{dutch} datasets.  Adult has 45,222 samples and 41 features and Dutch has 60,420 records and 36 features. We split each dataset into 80\% training data and 20\% testing data. We replicate experiment for 10 times and report  mean and standard deviation.

\subsubsection{Baseline.} 
For linear regression, we compare with the non-private linear regression and DPFW \cite{LouC18}.  DPFW achieves $(\varepsilon,\delta)$-differential privacy  and has two versions, DPFW-C in the centralized setting and DPFW-K in the multiparty setting. We specify the number of parties K=2,4,8 in our comparison.
For logistic regression, we compare with the non-private logistic regression and DPSGD \cite{SongCS13}.  DPSGD adds Laplace noise onto gradients for each iteration. DPSGD does not apply to multiparty setting. 

We evaluate   utility of linear regression by mean square error (MSE) and   utility of  logistic regression by   accuracy. 

\subsection{Linear Regression}
For linear regression, we first evaluate our method on two real-world datasets. Table \ref{tbl:lin} shows the results on US and Brazil datasets under different values (0.1, 1, 10) of privacy budget $\varepsilon$. We set $\delta=\tfrac{1}{n}$ for DPFW. Our FM method satisfies $(\varepsilon,0)$-differential privacy whereas DPFW satisfies $(\varepsilon,\delta)$-differential privacy. So our FM method is more restricted in terms of privacy protection. However, our FM method still significantly outperforms DPFW  with much smaller MSE under the settings of all three $\varepsilon$ values for both datasets, as shown in Table \ref{tbl:lin}. In fact, the utility of our method is very close to the non-private linear regression even when $\varepsilon$ is small. For example, our FM achieves the MSE of 0.0070 when $\varepsilon=0.1$ for Brazil data, which is very close to 0.0044 from non-private linear regression.   

We then evaluate our method on high dimensional synthetic datasets ($d=800$). 
Table \ref{tbl:lins} shows the results on synthetic datasets under different sparsity $s$. We set $\varepsilon=1$ for FM and  $\varepsilon=1,\delta=\tfrac{1}{n}$ for DPFW. DPFW is designed to work for high dimensional and sparse data. As shown in  \ref{tbl:lins}, DPFW works well with satisfactory MSE values when $s=0.1$ but has very poor utility with large MSE when $s=0.5,1.0$. On the contrary, our FM method works consistently well across all three datasets as the FM mechanism does not depend on data sparsity. We emphasize even with $s=0.1$, our FM method incurs much smaller MSE (2 or 3 orders of magnitude less) than DPFW. Moreover, our method preserves strict $(\varepsilon,0)$-differential privacy while DPFW preserves $(\varepsilon,\frac{1}{n})$-differential privacy.  Because our method achieves the same utility in the decentralized setting as in the centralized setting, MSE does not change  along with the number of participating parties $K$. On contrast, DPFW incurs more  utility loss as $K$ increases.

\subsection{Logistic Regression}
For logistic regression, we evaluate our method on two real-world datasets. Table \ref{tbl:log} shows the results on Adult and Dutch datasets under different privacy budget $\varepsilon$.  DPSGD adds Laplace noise onto gradients for each iteration, so the total amount of noise added into the model increases proportionally with the number of iterations. On the contrary, our FM  only adds noise to the objective function and only adds once. As shown in Table \ref{tbl:log},  the utility of our method is much better than DPSGD. Moreover, DPSGD cannot apply to the multiparty setting while our method is applicable and independent of $K$. We also would like to point out that, compared to linear regression, the utility of FM is worse as privacy budget decreases. This is because the order-2 Taylor expansion approximation is biased to the original objective function.

\section{Conclusions and Future Work}
We proposed a new framework for differential privacy preserving multiparty learning in the vertically partitioned setting based on the functional mechanism.
In the framework,   the server     dissects the objective function into single-party and cross-party sub-functions and rewrite them in the polynomial form. For the coefficients in the polynomial form related to one single party, they can be calculated by each party. For those coefficients related to two or multiple parties, we apply secure vector multiplication.
To achieve differential privacy, the parties add noise to the coefficients according to global sensitivity  and  send noisy coefficients back to server.
The server then solves the perturbed objective function and releases the private model.  Our method needs only one round of noise addition and secure aggregation. The released model in our framework achieves the same utility as applying the functional mechanism in the centralized setting.
Our evaluation on real-world and synthetic datasets for linear and logistic regressions shows the effectiveness of our proposed method.

In our framework, we proposed the use of the BGN doubly homomorphic encryption algorithm for secure inner product calculation. Secure calculation is the bottleneck of our framework as the noise addition of achieving differential privacy via functional mechanism is insignificant in terms of computation and communication cost. In our experiment, we mainly evaluated accuracy of regression models on varying numbers of parties $K$ (the number of features owned by a party when evenly distributed is $d/K$). Our theoretical analysis also showed that our algorithm can achieve the same accuracy as the centralized private model regardless of the number of parties. In our future work, we will evaluate performance due to the change of the number of features and study the performance overhead of BGN.

\section*{Acknowledgments}
This work was supported in part by NSF 1502273, 1920920, 1937010.

%\clearpage
%\bibliography{paper,sigproc}
%\bibliographystyle{aaai}

\end{document}